\documentclass{article}
\usepackage[T1]{fontenc}
\usepackage{iclr2027_conference,times}
\iclrfinalcopy
\usepackage{amsmath,amssymb,amsthm,mathtools}
\usepackage{graphicx,booktabs,array}
\usepackage{algorithm,algorithmic}
\usepackage{microtype}
\usepackage{hyperref,url}
\hypersetup{colorlinks=true,linkcolor=blue,citecolor=blue,urlcolor=blue,
 pdftitle={STRIDE: State-Transition Representation via Increment Dynamics and Evolution},
 pdfauthor={Yuchen Xiong, Siming Huang, Jianfeng Sun},
 pdfsubject={Condensed preprint}}
\graphicspath{{figures/}}
\newcommand{\R}{\mathbb{R}}

\newcommand{\ind}{\mathbf{1}}
\newcommand{\one}{\mathbf{1}}
\newcommand{\LL}{\textsc{STRIDE}}
\newcommand{\ES}{\operatorname{ES}}
\newcommand{\diag}{\operatorname{diag}}
\newcommand{\vecop}{\operatorname{vec}}
\newtheorem{proposition}{Proposition}
\title{STRIDE: State-Transition Representation via\\Increment Dynamics and Evolution}
\author{Yuchen Xiong$^{*}$ \And Siming Huang$^{*}$ \And Jianfeng Sun$^{\dagger}$}
\begin{document}
\raggedbottom
\maketitle
\lhead{Preprint}
\begin{center}\small
$^{*}$Equal contribution.\quad $^{\dagger}$Corresponding author.
\end{center}
\begin{abstract}
We introduce \LL\ (State-Transition Representation via Increment Dynamics and Evolution), which defines states through derivative fingerprints and learns local functions for state transitions (qpairs), recasting continuous forecasting as transition prediction. Trailing convolution windows estimate local joint-transition frequencies, whose lagged differences form a high-dimensional increment trajectory. Proper orthogonal decomposition (POD) gives coordinate paths, jointly forecast by sparse dynamics with memory. Recombining their forecasts and applying history-anchored inversion recovers future transition distributions; sampled state paths select local functions to generate continuous forecasts. Three-seed experiments compare \LL\ against fourteen baselines across nine benchmark families. Five independent Markov and hidden-state baselines cover all 230 evaluated tasks, with 220 complete whole-horizon pairs. Against these comparators, system-weighted late-Energy win fractions range from 73.6\% to 78.5\% on 190 multi-step pairs; against DLinear, the fraction is 77.3\% on 72 paired multi-step tasks. Matched controls examine the intermediate representation. On 64 independently initialized Aizawa trajectories, late-Energy reductions against four matched controls range from approximately 24\% to 56\%, with all four prespecified contrasts passing Holm correction. These results connect transition-statistic prediction to continuous probabilistic forecasting, with substantial long-horizon gains in the matched Aizawa study.
\end{abstract}

\section{Introduction}
\LL\ models complex dynamics through state transitions and local continuous functions. A \emph{derivative fingerprint} combines first and second derivatives with higher derivatives or observation levels. Clustering defines states; a local function models continuous evolution during each ordered state transition (qpair). Convolution and lagged differencing trace local transition-frequency changes as a high-dimensional trajectory for decomposition and coupled forecasting. From the predicted trajectory, inverse differencing recovers future frequency windows; causal deconvolution yields joint-transition distributions. Sampled state pairs then select local functions to generate future observations.

Specifically, local polynomials fitted to past observations give the derivative estimates. Training fingerprints define states $q$; validation selects the fingerprint and state count. To predict their transitions, \LL\ models how their local joint frequencies change. A trailing convolution of one-hot state-pair tables estimates local frequencies. Lagged differences, vectorized over all state pairs, form the increment trajectory.

Proper orthogonal decomposition (POD) projects this increment trajectory onto fixed directions, giving one subtrajectory per direction (Figure~\ref{fig:pipeline}). A sparse coupled model simultaneously computes each subtrajectory's next point from preceding coordinates of \emph{all} retained subtrajectories, their finite histories, and accumulated memories. Repeated updates extend the subtrajectories; recombination gives the future transition-frequency increment trajectory. History-anchored inverse differencing recovers future frequency windows. Learned causal deconvolution then uses these windows and historical boundary information to recover joint-transition distributions. Their marginals give future-state distributions, and row conditioning gives the transition probabilities used to sample state paths. Each sampled state pair selects its local function to extend the continuous trajectory.

The state encoder, coordinate dynamics, inverse readout, and local output functions are fitted in stages. This modular construction allows the transition forecaster to be replaced while retaining the same state partition and continuous predictors. Observed anchors carry the frequency levels, while the sparse coordinate model advances their increments.

Our evaluation separates the performance of the complete forecasting system from the contribution of its intermediate representation. Three-seed comparisons against fourteen baselines span nine benchmark families. Markov--Gaussian, first- and third-order Gaussian HMMs, ARHMM, and continuous Gaussian Markov each cover the full 230-task inventory and yield 220 complete pairs with \LL. These independent forecasters learn their own dynamics and observation models. Matched controls examine trajectory dynamics, transition decoding, inversion, and continuous output. Finally, a separately locked Aizawa study tests long-horizon forecasts on 64 independently initialized trajectories. Our forecasting chain constructs transition-frequency increment trajectories from observations, decomposes and predicts them, and recovers future states and continuous paths. The evaluation separates system-level performance from conditional module contributions.

\begin{figure}[t]
\centering
\includegraphics[width=\linewidth]{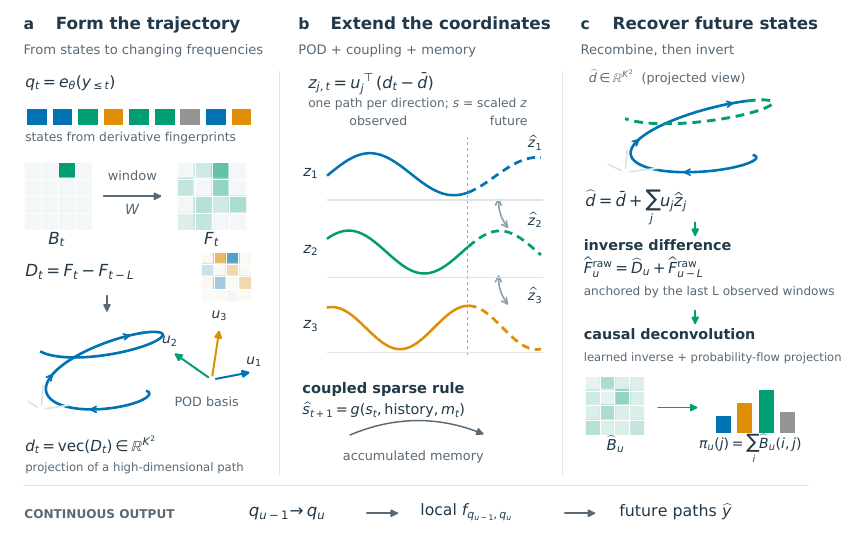}
\caption{\textbf{Form a trajectory, extend its coordinates, recover future states.} (a) Derivative fingerprints define states; window convolution and differencing produce transition-frequency increments $D$. The path in $\R^{K^2}$ and three POD directions are shown in projection. (b) Each direction gives a scalar subtrajectory. All next coordinates are jointly advanced from the current coordinate vector and its histories; arrows illustrate coupling and dashed curves predictions. (c) Recombination gives future increments $\widehat d$; observed historical boundaries initialize inverse differencing and causal deconvolution to recover windows $\widehat F^{\rm raw}$ and flow-consistent tables $\widehat B$. Their marginals give state distributions; sampled pairs select local continuous functions. Four states and three directions are illustrative.}
\label{fig:pipeline}
\end{figure}

\paragraph{Related work.}
POD and sparse identification provide coordinates and parsimonious dynamical models \citep{holmes2012turbulence,brunton2016sindy}. Delay features support compact nonlinear forecasting \citep{gauthier2021nvar}, while data-driven Mori--Zwanzig models separate current action from memory \citep{lin2021mz}. SINDy autoencoders jointly learn coordinates and sparse equations \citep{champion2019coordinates}; SINDy-SHRED couples recurrent sensing and decoding to sparse latent dynamics \citep{sindyshred2026}. \LL\ fixes an empirical transition-statistics lift, uses POD coordinates, and places its learned inverse readout after the sparse dynamics. It uses no recurrent SHRED encoder or joint end-to-end training.

Markov and hidden Markov models describe future state uncertainty through transition probabilities and, where appropriate, observation distributions \citep{rabiner1989hmm}. \LL\ predicts future transition probabilities by forecasting an evolving trajectory of empirical frequency increments. In observation space, DeepAR learns recurrent probabilistic forecasts, DLinear uses linear trend--seasonal decomposition, TimePrism models weighted forecast scenarios, and context parroting reuses continuations of similar historical patterns \citep{salinas2020deepar,zeng2023dlinear,timeprism2026,zhang2026parroting}. Koopman forecasters model transformed coordinates, including adaptation to distribution shifts and probabilistic prediction \citep{wang2023knf,zheng2025koonpro}. \LL\ instead models explicit joint-transition statistics and reconstructs discrete paths through a probability-constrained readout. The empirical question is when coupled forecasting of changing transition statistics and causal inversion improve on direct observation-space forecasts and simpler state-space dynamics.

\section{From state transitions to a forecastable trajectory and back}
\label{sec:method}
Let $y_t\in\R^d$ denote observations and $\mathcal H_t$ the information available at forecast origin $t$. The objective is a distribution over $y_{t+1:t+H}$, evaluated as an ensemble of complete paths. Every fitted transformation below uses training data; validation selects configurations and checkpoints. Future observations appear only as training targets or evaluation references.

The state partition supports two linked operations: local functions model continuous evolution conditional on a state pair, while forecasts of transition statistics provide the probabilities used to select those functions.

\subsection{Define states and form the transition-frequency trajectory}
\label{sec:lift}
The state $q_t$ is a cluster label of a \emph{derivative fingerprint}. A local polynomial fitted to a trailing observation window gives derivative estimates at its final endpoint. The fingerprint combines first and second derivatives with higher derivatives or observation levels. Training-only normalization and clustering define $K$ states; nearest-center assignment gives $q_t=e_\theta(y_{t-J+1:t})$. Derivatives are estimated from observed prefixes, not future observations or simulator-provided derivative labels.

Define the observed joint-transition table and its windowed frequency by
\begin{equation}
B_t(i,j)=\ind\{q_{t-1}=i,q_t=j\},\qquad
F_t=\frac{1}{W}\sum_{k=0}^{W-1}B_{t-k},\qquad D_t=F_t-F_{t-L}.
\label{eq:lift}
\end{equation}
The causal uniform convolution estimates local joint frequencies; lag differencing measures their change. Here $W$ counts transitions, requiring $W+1$ states, and $L$ is a difference lag. $D_t$ is a finite increment, without division by $L$ or physical time. Entries of $F_t$ sum to one, while those of $D_t$ sum to zero. Flattening $D_t$ into $d_t=\vecop(D_t)\in\R^{K^2}$ gives the modeled trajectory.

\subsection{Resolve the trajectory into directional coordinate paths}
\label{sec:pod}
POD is fitted to centered training vectors $d_t$. Let $U_r=[u_1,\ldots,u_r]\in\R^{K^2\times r}$ contain its leading orthonormal directions and $\bar d$ the training mean. Projection gives scalar paths $z_{j,t}=u_j^\top(d_t-\bar d)$: each tracks the strength of one pattern of changing transition mass. Training locations $b$ and positive scales $a$ standardize the coordinates:
\begin{equation}
s_t=\diag(a)^{-1}[U_r^\top(d_t-\bar d)-b],\qquad
\widehat d_t=\bar d+U_r[b+\diag(a)\widehat s_t].
\label{eq:pod}
\end{equation}
The $r$ scalar paths share a time axis and jointly reconstruct the retained high-dimensional trajectory. Since training changes are zero-sum, the nonzero POD rank is at most $K^2-1$. Candidate ranks are derived from training covariance; validation forecasts select rank and history length, followed by complete-chain validation. The basis remains fixed during forecasting.

\subsection{Advance the coordinate paths through coupling and memory}
\label{sec:dynamics}
Following sparse identification of nonlinear dynamics (SINDy; \citealp{brunton2016sindy}), we fit a discrete next-coordinate map. At the level of its input--output structure,
\begin{equation}
\widehat s_{t+1}=
\left[\Phi(s_t,s_{t-1},\ldots,s_{t-p+1},m_t)\Xi\right]^\top
+\varepsilon_{t+1},\qquad
m_t=\mathcal M_{\lambda}(m_{t-1},s_t).
\label{eq:sindy}
\end{equation}
Here $\Phi$ is a row of current-coordinate, elementwise nonlinear, lagged-difference, and accumulated-memory features; $\Xi$ contains sparse fitted coefficients. The vector $m_t$ collects compact fading-memory states, and $\mathcal M_{\lambda}$ denotes fixed causal linear recursions with prescribed decay parameters. Equation~\eqref{eq:sindy} specifies the coupling structure rather than an exhaustive feature enumeration.

Each output coordinate can use inputs from \emph{all} retained directions and their histories. With three paths, for example, the next coordinate of path 1 can depend on paths 1, 2, and 3; the other outputs are fitted in the same way. This shared-input structure supplies cross-direction coupling without requiring cross-coordinate polynomial products. Validation chooses the finite history setting, while sparse fitting selects feature coefficients.

Feature columns are normalized using training statistics, and regularized regression with threshold-and-refit updates estimates $\Xi$. Training residuals supply the innovation covariance. All next coordinates are evaluated from the same current histories and memories, which are then advanced together. Equation~\eqref{eq:pod} recombines the resulting paths through the fixed POD directions to obtain future increment tables $\widehat D_{t+h}$.

\subsection{Invert the trajectory into windows and future state distributions}
\label{sec:inverse}
\paragraph{Inverse differencing recovers future windows.}
For each recombined $D$ trajectory, the inverse of the lag difference uses the last $L$ observed frequency tables as boundary conditions:
\begin{equation}
\widehat F_{t+h}^{\rm raw}=\widehat D_{t+h}+
\begin{cases}F_{t+h-L},&h\le L,\\
\widehat F_{t+h-L}^{\rm raw},&h>L.
\end{cases}
\label{eq:integration}
\end{equation}
\paragraph{Causal deconvolution recovers the current transition table.}
With exact frequencies and boundary transitions, the convolution has the recursive inverse $B_u=WF_u-\sum_{k=1}^{W-1}B_{u-k}$. Applied to predictions, this recursion carries reconstruction errors into later steps. \LL\ keeps Equation~\eqref{eq:integration} fixed and learns a causal window-to-transition readout instead.

A small causal temporal convolutional network receives summaries of the forecast-frequency ensemble, observed boundary transitions and frequency anchors, the last observed state, and horizon information. It is fitted using full-horizon joint-transition Brier loss, with upstream components fixed. These inputs determine the transition-table sequence used for path sampling.

\paragraph{Probability flow gives the next state distribution.}
Let $A_h(i,:)$ be a row after global softmax over $K^2$ entries, and let $\pi_{h-1}$ be the previous state's marginal, initialized by $\pi_0=\operatorname{onehot}(q_t)$. The readout projects each row onto a nonnegative simplex with mass $\pi_{h-1}(i)$:
\begin{equation}
\widehat B_h(i,:)=\operatorname{Proj}_{\Delta(\pi_{h-1}(i))}A_h(i,:),\qquad
\pi_h(j)=\sum_i\widehat B_h(i,j).
\label{eq:flow}
\end{equation}
This enforces nonnegativity, total mass one, and agreement between adjacent source and destination marginals. For positive source mass, set $T_h(i,j)=\widehat B_h(i,j)/\pi_{h-1}(i)$; zero-mass rows can be assigned any probability vector because they are unreachable. Sampling from these matrices produces a history-conditioned, time-inhomogeneous Markov path. Rich history enters the predicted matrix sequence; the sampler is first-order conditional on that sequence.

\begin{proposition}[Probability-flow consistency]
\label{prop:flow}
If Equation~\eqref{eq:flow} holds and $q_t\sim\pi_0$, then sampling with $T_1,\ldots,T_H$ gives $\Pr(q_{t+h-1}=i,q_{t+h}=j\mid\mathcal H_t,\widehat B_{1:H})=\widehat B_h(i,j)$.
\end{proposition}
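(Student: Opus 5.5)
We argue by induction on $h$, proving the joint statement together with the marginal statement $\Pr(q_{t+h}=j\mid\mathcal H_t,\widehat B_{1:H})=\pi_h(j)$. All probabilities below are conditional on $\mathcal H_t$ and the fixed predicted sequence $\widehat B_{1:H}$. We treat that sequence as deterministic under this conditioning, so the sampler is an ordinary time-inhomogeneous Markov chain with initial law $\pi_0$ and kernels $T_1,\ldots,T_H$.

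\textbf{Base case.} By hypothesis $q_t\sim\pi_0$, so $\Pr(q_t=i)=\pi_0(i)$. This is the marginal statement at $h=0$.

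\textbf{Inductive step.} Assume $\Pr(q_{t+h-1}=i)=\pi_{h-1}(i)$ for all $i$. By construction of the sampler, $\Pr(q_{t+h}=j\mid q_{t+h-1}=i)=T_h(i,j)$ for every $i$ with $\pi_{h-1}(i)>0$. Hence
\[
\Pr(q_{t+h-1}=i,q_{t+h}=j)=\pi_{h-1}(i)\,T_h(i,j).
\]
We then split into two cases by the source mass.

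\emph{Positive mass.} If $\pi_{h-1}(i)>0$, the definition $T_h(i,j)=\widehat B_h(i,j)/\pi_{h-1}(i)$ gives exactly $\widehat B_h(i,j)$.

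\emph{Zero mass.} If $\pi_{h-1}(i)=0$, the joint probability is bounded by $\Pr(q_{t+h-1}=i)=0$, whatever arbitrary row $T_h(i,:)$ was chosen. On the other side, Equation~\eqref{eq:flow} projects row $i$ onto $\Delta(0)=\{0\}$ (a nonnegative vector with total mass zero), so $\widehat B_h(i,j)=0$ as well.

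Two side conditions are needed for $T_h$ to be a legitimate kernel: each row of $T_h$ with positive source mass must be a probability vector. This holds because $\widehat B_h(i,:)\ge 0$ and its row sum is $\pi_{h-1}(i)$ by the simplex projection. Summing the joint identity over $i$ then gives
\[
\Pr(q_{t+h}=j)=\sum_i\widehat B_h(i,j)=\pi_h(j),
\]
which closes the induction.

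\textbf{Main obstacle.} The only delicate point is conditioning. We must argue that, given $\mathcal H_t$ and $\widehat B_{1:H}$, the transition at step $h$ depends on the past path only through $q_{t+h-1}$. This is precisely the first-order property of the sampler stated before the proposition. It is what allows the induction to use only the marginal of $q_{t+h-1}$ rather than the full joint law of the earlier path. I would state it explicitly as the defining property of the sampling procedure. The zero-mass case is the other place where care is needed, and it is handled by the $\Delta(0)=\{0\}$ observation above.
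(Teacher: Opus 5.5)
Your proof is correct and follows essentially the same induction as the paper: propagate the marginal $\pi_{h-1}$, use $\pi_{h-1}(i)T_h(i,j)=\widehat B_h(i,j)$ on positive-mass rows, and sum over source states to recover $\pi_h$. Your $\Delta(0)=\{0\}$ argument for zero-mass rows and your explicit statement of the sampler's first-order Markov property spell out what the paper compresses into ``zero-mass rows contribute zero.''
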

Induction on the propagated marginals gives the stated joint-transition probabilities; Appendix~\ref{app:properties} provides the proof and the exact-inverse identities.

\subsection{Route local continuous functions with the predicted states}
\label{sec:writer}
For each sampled transition $q_{u-1}=i,q_u=j$, a local model predicts a residual around constant-velocity extrapolation. In training-standardized observation coordinates $x$, write
\begin{equation}
\widehat x_u=2\widehat x_{u-1}-\widehat x_{u-2}
+g_{\psi,ij}(\widehat x_{u-1},\widehat x_{u-1}-\widehat x_{u-2})+\eta_{u,ij}.
\label{eq:writer}
\end{equation}
The local residual map $g_{\psi,ij}$ combines a shared polynomial model with a transition-specific residual fit. Training residuals determine Gaussian innovation covariances; validation chooses the local model configuration. Equation~\eqref{eq:writer} recursively generates continuous paths from sampled state pairs and preceding observations.

Training proceeds in stages: fit the encoder and POD on training data; select sparse dynamics using validation forecasts; train the inverse readout against training transitions; fit and validate the local output model. Validation selects the final configuration using the prescribed forecast scores. This staged fitting permits replacement of a single module while keeping the state partition and local functions fixed. Appendix~\ref{app:workflow} summarizes the forecast workflow.

\section{Experiments}
\label{sec:experiments}
Our evaluation separates three questions: the performance of the complete forecaster, the contribution of predicted transition statistics with the encoder and local predictors held fixed, and the role of individual stages under matched controls. Independent baselines assess complete forecasts; stage-specific tests and the long-horizon study examine the intermediate representation.

\subsection{Protocol and comparison scope}
\paragraph{Tasks and independent baselines.}
The benchmark spans nine families: dysts, MDBench, FinStressTS, ProbTS, MDGen-pilot, StocBench, GIFT-Eval, fev, and BOOM. Of 240 planned tasks, 230 reach evaluation and 220 have fully scorable \LL\ forecasts across three fitting seeds. Appendix~\ref{app:scope} records family coverage and distinguishes eligibility exclusions from incomplete or unscorable forecasts.

Fourteen baselines cover clustered Markov--Gaussian, continuous Gaussian Markov, Gaussian HMM, third-order HMM, ARHMM, direct SINDy, seasonal naive, Holt, VAR, NVAR, context parroting, DLinear, DeepAR, and TimePrism. These complete observation-space forecasters have their own predictive outputs. Their state models do not use \LL's encoder, inverse decoder, or local functions. In contrast, the same-$q$ Markov \emph{module control} retains \LL's encoder and local functions.

Comparisons share prescribed splits, contexts, horizons, test origins, and three fitting seeds. Selection uses validation data. Evaluation uses up to 256 origins and 32 paths for probabilistic forecasts; deterministic predictions are point masses. Normalization, clustering, and POD use training data only. Five full-inventory state-space baselines each yield 220/230 complete whole-horizon pairs.

\paragraph{Scores and statistical units.}
The primary score is Euclidean path Energy in training-standardized coordinates \citep{gneiting2007proper,shahroudi2024trajectoryenergy,ashok2024tactis2}. For $S>1$ paths and $M$ observed target coordinates,
\begin{equation}
\widehat\ES=\frac{1}{S\sqrt M}\sum_{s=1}^{S}\|x_s-y\|_2
-\frac{1}{2S(S-1)\sqrt M}\sum_{s\ne r}\|x_s-x_r\|_2.
\label{eq:energy}
\end{equation}
For a point mass, the second term is zero. Intermediate endpoints include $D$ MSE, checked-window $F$ MSE, joint-pair Brier, and state Brier. Continuous diagnostics also examine distribution quality \citep{ferro2014fair,scheuerer2015variogram}.

Task scores average matched origins within each seed, then the three seeds equally. Family summaries average within each underlying system/source and then weight those groups equally. Sampling, noise, and horizon variants are not independent datasets. Finite-score comparisons condition on complete pairs; failures are not replaced by arbitrary finite penalties. Group-bootstrap intervals are descriptive. Stratified diagnostics reuse fixed test records, whereas the independent Aizawa replication locks fitting choices before generating new test trajectories.

\subsection{Whole-horizon and late-horizon benchmark performance}
\label{sec:late}
Figure~\ref{fig:late-comparisons} compares \LL\ against all fourteen baselines using whole-path and final-half Energy on identical comparator-specific multi-step cohorts. Late Energy jointly scores the subpath $\{\lfloor H/2\rfloor+1,\ldots,H\}$ with equal weights on observed time--coordinate entries. A win compares three-seed task means. System-weighted fractions average task win indicators within each underlying source and then equally across sources; they need not equal raw task-win ratios.

\begin{figure}[t]
\centering\includegraphics[width=\linewidth]{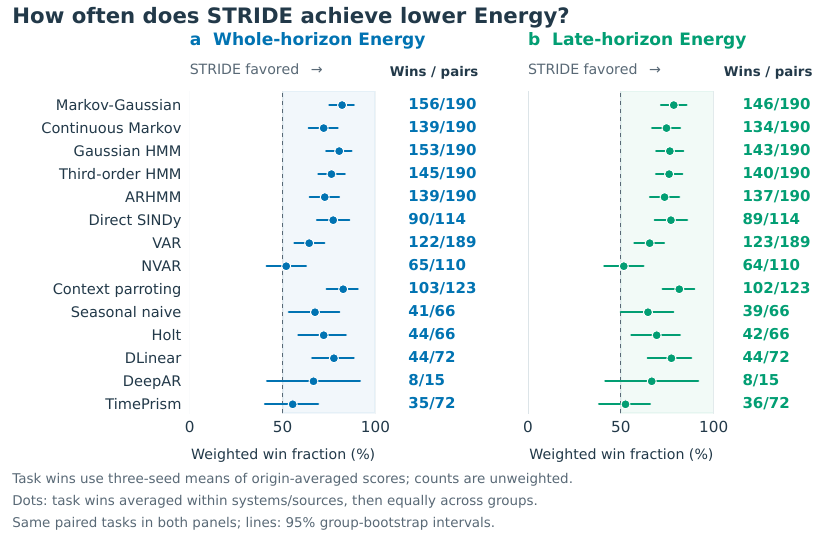}
\caption{\textbf{Whole- and late-horizon wins against independent forecasters.} (a) Whole-path Energy; (b) final-half path Energy, on identical comparator-specific multi-step cohorts. Bold counts give unweighted task wins / paired tasks. Points average task win indicators within each system/source, then equally across groups; lines give descriptive 95\% group-bootstrap intervals. The dashed line marks 50\%. The five full-inventory state-space baselines each have 190 paired tasks. Table~\ref{tab:comparison-coverage} gives the paired/planned inventory.}
\label{fig:late-comparisons}
\end{figure}

Against the five full-inventory state-space baselines, \LL\ wins 139--158 of 220 complete whole-horizon pairs, including single-step tasks. On the 190 common multi-step pairs, its late system-weighted win fractions range from 73.6\% to 78.5\%. Against DLinear, it wins 44/72 paired multi-step tasks at both endpoints, with a late weighted fraction of 77.3\%. Results against NVAR and TimePrism are more balanced: late weighted fractions are 51.6\% and 52.5\%, respectively. Figure~\ref{fig:late-comparisons} retains these comparator-specific differences rather than presenting a pooled raw-error average.

The cross-family picture is not uniformly favorable. On MDBench, \LL\ wins 75 of 101 paired conditions against VAR across 66 system groups, yet its mean loss is concentrated in a small number of systems. This coexistence of many wins and large losses illustrates why win fractions should not be read as guarantees of numerical stability or uniform superiority. The source-benchmark adaptations and coverage limits appear in Appendix~\ref{app:scope}.

\subsection{Representation and trajectory evolution}
\label{sec:ablation}
\paragraph{Transition activity and matched controls.}
The state partition, window, and lag determine the measured trajectory. A validation-only window/lag study varies the temporal representation while holding the encoder, state count, rank, history setting, and local functions fixed. Coupling and memory controls remove cross-direction inputs or accumulated-history features while retaining the remaining forecasting chain.

Activity matters. Many Exchange, Weather, and Mackey--Glass targets have numerically zero $D$, whereas Aizawa and StocBench provide more frequently changing targets. On quiet targets, a context-only readout can match the complete model closely; on active, multidirectional targets, the increment dynamics can contribute more directly. Thus a good observation-space score alone does not establish a contribution from every intermediate module.

\paragraph{Selected trajectory findings.}
Known-operator controls and observed-transition diagnostics support the roles of coupling and memory in the evaluated active regimes. For example, removing memory raises Aizawa $D$ MSE from approximately 0.0050 to 0.0797 in the reported module comparison. Active-node diagnostics also favor the learned increment forecasts over simple zero-change or persistence rules in specified StocBench and Hospital settings. These are conditional module findings, not claims of universal gains across quiet or rank-one representations.

\subsection{Causal inversion and local continuous output}
\label{sec:inverse-results}
A context-only (No-$D$) control removes the forecast-frequency channels during both decoder fitting and evaluation, while retaining the architecture, upstream predictions, historical inputs, and local functions. A separate hard-inverse control keeps the predicted increment ensemble and inverse-difference recurrence fixed and changes only the window-to-transition operation. These comparisons separate the contribution of dynamic frequency information from the choice of readout.

On StocBench, learned deconvolution reduces Energy by approximately 2.3\% relative to recursive inversion under this matched upstream control. Oracle-history and oracle-frequency diagnostics assess the frozen inverse mapping, and teacher-forced local fits assess the continuous functions conditional on true state pairs. Their favorable results are diagnostic evidence only: they are not attainable forecast baselines, and do not remove the need to evaluate the complete causal chain.

\subsection{Matched long-horizon contribution}
\label{sec:long-horizon-control}
The final case study asks whether predicted frequency inputs improve continuous prediction as the horizon grows. The original chronological Aizawa comparison is descriptive because it has only five time blocks. A separate replication locks all fitting choices before generating 64 independently initialized trajectories, with one forecast origin per trajectory and three fitted seeds.

Four matched controls examine context-only decoding, same-state-partition Markov dynamics, history-direct decoding, and duration-aware state dynamics. They share the frozen local continuous functions; neural readouts use matched training and validation data and the same long-horizon training budget.

At $H=256$, \LL\ reduces late-half Energy by 51.5\% relative to the No-$D$ control; reductions across all four controls range from approximately 24\% to 56\%. All four prespecified late-Energy contrasts pass Holm correction at the 5\% familywise level. Inference uses paired trajectory-level differences after averaging the fitted seeds, not forecast leads or origins as independent replicates. These trajectories provide replication within one fixed Aizawa system and a fixed training-based initial-condition distribution, not across independent physical systems. Appendix~\ref{app:inference} records the interpretation and limitations.

\section{Discussion and conclusion}
\LL\ connects states defined by local derivative fingerprints to continuous forecasting through transition-frequency prediction. Coupled coordinate dynamics extend the frequency-increment trajectory; history-anchored inversion recovers future frequency windows, and a learned causal readout yields transition probabilities. Sampled state pairs select the local functions that generate future observations. The experiments support this intermediate representation in the evaluated settings, with substantial matched long-horizon gains on Aizawa.

The contribution is conditional on the representation and task. Quiet targets may provide little incremental benefit over historical context, strong direct forecasters remain competitive on some cohorts, and large errors on a few systems can coexist with favorable win fractions. Probability-flow consistency guarantees a coherent decoded state law, not calibrated continuous uncertainty or universal forecast stability. The result is a concrete, testable connection between transition-statistic prediction and continuous probabilistic forecasting.

\label{main-text-end}
\section*{Version scope and availability}
This condensed preprint presents the core formulation and selected empirical results. Implementation-level settings, exhaustive ablation configurations, extended result tables, and raw evaluation records are not included in this version. A fuller technical revision is planned after the current review cycle. The accompanying source files reproduce this document, not the experimental runs. The reported comparisons use saved predictions or frozen model checkpoints; the new-trajectory Aizawa study fixes fitting choices before test generation.
\section*{Ethics statement}
The study uses existing benchmark data and numerical simulations. It does not introduce human-subject experiments or a deployed decision system. Financial series are used solely for forecasting evaluation. The demonstrated undercoverage and task-specific instability should be considered before using the predictions in consequential applications.
\section*{AI use statement}
Generative AI tools assisted with literature retrieval, figure preparation, and manuscript drafting and editing. The authors take responsibility for the final manuscript, including its references, figures, and scientific claims.
\clearpage
\begingroup
\raggedright
\urlstyle{same}
\bibliography{references}
\bibliographystyle{iclr2027_conference}
\endgroup
\clearpage
\appendix
\raggedbottom
\section{Essential mathematical properties and workflow}
\label{app:properties}
This appendix retains the probability and inversion properties needed to interpret the main formulation.

\subsection{Probability-flow consistency}
\begin{proof}[Proof of Proposition~\ref{prop:flow}]
Projection gives $\widehat B_h\one=\pi_{h-1}$ and $\widehat B_h\ge0$. Starting from a probability vector $\pi_0$, every table has total mass one. If the preceding state has marginal $\pi_{h-1}$, its next pair probability is
\begin{equation}
\pi_{h-1}(i)T_h(i,j)=\widehat B_h(i,j)
\end{equation}
on positive-mass rows. Zero-mass rows contribute zero. Summing over source states yields $\pi_h$, completing the induction. Convolving these joint marginals with the observed boundary consequently gives the expected empirical window table under the decoded conditional law.
\end{proof}
The checked-frequency diagnostic uses this convolved table, denoted $\widehat F^{\rm check}$, rather than treating the raw integrated forecast as a probability table. The learned readout need not make $\widehat F^{\rm check}=\widehat F^{\rm raw}$ exactly.

\subsection{Exact inversion and error accumulation}
Given exact frequency anchors and the last $W-1$ observed pair tables, exact future $D$ uniquely determines future $F$ and $B$:
\begin{equation}
F_u=D_u+F_{u-L},\qquad B_u=WF_u-\sum_{k=1}^{W-1}B_{u-k}.
\label{eq:exact-inverse}
\end{equation}
These identities follow by rearranging Equation~\eqref{eq:lift}; uniqueness follows by induction. For the unconstrained recursion applied to approximate changes, subtraction gives
\begin{equation}
e_u^F=e_u^D+e_{u-L}^F,\qquad
e_u^B=We_u^F-\sum_{k=1}^{W-1}e_{u-k}^B.
\label{eq:inverse-errors}
\end{equation}
If the anchors are exact and $\|e^D_{t+h}\|\le\epsilon$, expansion along each residue class modulo $L$ yields $\|e^F_{t+h}\|\le\lceil h/L\rceil\epsilon$. The bound concerns the algebraic inverse, not the learned readout. A projected hard-inverse control introduces additional projection corrections.

\subsection{Forecast-time workflow}
\label{app:workflow}
\begin{algorithm}[H]
\caption{\LL\ forecast workflow (module-level summary)}
\label{alg:forecast}
\begin{algorithmic}[1]
\REQUIRE Observed history $\mathcal H_t$, horizon $H$, fitted components.
\STATE Encode trailing observations; form boundary states, $B$, $F$, and $D$.
\STATE Project increment history into the fixed POD coordinates.
\STATE Jointly extend all coordinate paths using sparse dynamics and memory.
\STATE Recombine future increments; recover frequency windows from anchors.
\STATE Apply the causal readout and probability-flow constraints to obtain $T_{1:H}$.
\STATE Sample state paths and route the local functions to generate continuous paths.
\ENSURE An ensemble of complete observation-space forecasts.
\end{algorithmic}
\end{algorithm}
For fixed fitted parameters, horizon, and innovation streams, trailing encoding, causal coordinate updates, anchored integration, causal decoding, and recursive sampling use only observed history and generated forecast quantities. Future observations are not decoder inputs. Fitting and validation are completed before the evaluated forecast.

\clearpage
\section{Evaluation scope and interpretation}
\label{app:scope}
\begin{table}[H]
\centering
\small
\setlength{\tabcolsep}{3.8pt}
\caption{Evaluation scope. Planned tasks are the chosen full experiments, not the much larger configuration-only inventory. Evaluated tasks include all completed result records, even if a rollout was incomplete or not fully scorable. Units describe evaluated systems/source corpora, rather than independent replications. All comparisons preserve their original data splits, seeds, horizons and forecast origins.}
\label{tab:scope}
\begin{tabular}{@{}lrrrr@{}}
\toprule
Family & Planned & Evaluated & Fully scorable & Units \\
\midrule
BOOM & 24 & 23 & 23 & 23 \\
FinStressTS & 30 & 30 & 30 & 6 \\
GIFT-Eval & 10 & 10 & 7 & 10 \\
MDBench & 108 & 102 & 101 & 66 \\
MDGen pilot & 1 & 1 & 1 & 1 \\
ProbTS & 24 & 24 & 24 & 3 \\
StocBench & 1 & 1 & 1 & 1 \\
dysts & 22 & 22 & 21 & 8 \\
fev & 20 & 17 & 12 & 15 \\
Total & 240 & 230 & 220 & 133 \\
\bottomrule
\end{tabular}
\end{table}

\subsection{Source benchmarks and adaptations}
The dynamical families use dysts \citep{gilpin2021dysts}, MDBench \citep{mdbench}, and synthetic mechanisms from FinStressTS \citep{finstressts}. In MDBench, derivative fingerprints are estimated from observation histories; simulator-provided derivative labels are not inputs. Time-series families use declared subsets of ProbTS, GIFT-Eval, fev, and BOOM \citep{probts,gifteval,fev,boom}, with chronological fitting/validation splits and prescribed test origins. These are supervised forecasting adaptations, not full-benchmark zero-shot leaderboards; no external covariates are used.

The MDGen pilot \citep{mdgen} forecasts molecular backbone-angle observables, not all-atom trajectories. StocBench \citep{stocbench} is adapted to history-conditioned forecasting of spatial fields. A training-only spatial POD defines its modeled coordinates; the principal Energy scores use those coordinates rather than a full native-field endpoint. This differs from the source benchmark's single-state conditional reference task.

\subsection{Eligibility, completion, and comparator coverage}
Ten of 240 planned tasks fail representation eligibility before forecast evaluation, owing to insufficient history, too few training increment nodes, or a constant training prefix. Of the 230 evaluated tasks, 220 yield fully scorable \LL\ forecasts across all three fitting seeds. An evaluated record is therefore not synonymous with a successful complete forecast.

A forecast origin with any failed ensemble member is not scored on its surviving members. A complete forecast with no observed target labels remains unscorable. Complete-pair results do not characterize missing or failed tasks, and no arbitrary finite penalty is substituted for those cases. Finite but very large scores are distinct from nonfinite failures. This distinction is important for interpreting the concentration of large losses in particular systems.

Table~\ref{tab:comparison-coverage} preserves the comparator-specific multi-step inventory for Figure~\ref{fig:late-comparisons}. Its 1,903 planned comparisons yield 1,777 complete pairs. Thirty evaluated single-step tasks do not enter this late-horizon figure. The five full-inventory state-space baselines each have 220 whole-horizon pairs across 126 system/source groups and 190 multi-step pairs across 120 groups. Coverage is not uniform across all fourteen baselines.

\clearpage
\begin{table}[H]
\centering
\small
\setlength{\tabcolsep}{8pt}
\caption{Multi-step comparison inventory for Figure~\ref{fig:late-comparisons}. Paired tasks require complete three-seed scores for both methods. Planned comparisons retain incomplete or unscorable cases. Groups are underlying systems/sources, not independent experimental replications.}
\label{tab:comparison-coverage}
\begin{tabular}{@{}lrrr@{}}
\toprule
Comparator & Paired & Planned & Groups \\
\midrule
Markov--Gaussian & 190 & 200 & 120 \\
Continuous Markov & 190 & 200 & 120 \\
Gaussian HMM & 190 & 200 & 120 \\
Third-order HMM & 190 & 200 & 120 \\
ARHMM & 190 & 200 & 120 \\
Direct SINDy & 114 & 125 & 73 \\
VAR & 189 & 200 & 120 \\
NVAR & 110 & 125 & 74 \\
Context parroting & 123 & 125 & 75 \\
Seasonal naive & 66 & 74 & 44 \\
Holt grid & 66 & 74 & 44 \\
DLinear & 72 & 81 & 47 \\
DeepAR & 15 & 18 & 12 \\
TimePrism & 72 & 81 & 47 \\
\midrule
Total & 1,777 & 1,903 & --- \\
\bottomrule
\end{tabular}
\end{table}

\subsection{Statistical units and limitations}
\label{app:inference}
A benchmark condition is a system/source, sampling or noise setting, and horizon combination. Conditions from the same underlying source are grouped before family summaries; they are not treated as independent datasets. Win indicators compare three-seed mean scores, with numerical ties contributing one half. Figure intervals are descriptive group-bootstrap intervals conditional on the observed complete-pair cohort, not evidence that every task or family benefits.

The original chronological Aizawa experiment has five non-overlapping test blocks and is descriptive. The complementary $H=256$ experiment uses 64 independently initialized trajectories, one origin per trajectory, and fitting choices locked before generation. Initial conditions are sampled from a fixed training-based distribution with perturbations. The four primary contrasts concern joint late-half path Energy. Paired sign-flip tests average the fitted seeds before operating on trajectory-level differences, assume sign symmetry under the null, and apply Holm correction across the four contrasts. Independence is conditional on the fixed system and initial-condition distribution; individual leads are not independent replications.

Probability-flow consistency is a statement about the decoded discrete law. It does not imply calibrated continuous predictive intervals; the broader diagnostics include undercoverage and task-specific instability. Oracle-history, oracle-frequency, and teacher-forced local tests isolate modules and must not be interpreted as deployable forecasts. Quiet-regime equivalence and active-regime gains are distinct conditional findings. Detailed per-lead tests, full ablation configurations, and extended distributional diagnostics are deferred to a subsequent revision.

\end{document}